\newtheorem{assumption}{Assumption}
\DeclareMathOperator*{\argmin}{argmin}
\crefname{hypothesis}{Hypothesis}{Hypotheses}
\title{Neural Network-Based Parameter Estimation for Non-Autonomous Differential Equations with Discontinuous Signals\thanks{Submitted to the editors, \today.}}
\author{
Hyeontae Jo\textsuperscript{1,}\textsuperscript{2}, 
Kre\v{s}imir Josi\'{c}\textsuperscript{3,}\textsuperscript{4,}\textsuperscript{$\dagger$}, 
Jae Kyoung Kim\textsuperscript{2,}\textsuperscript{5,}\textsuperscript{$\dagger$}
}
\date{}
\newcommand*{\addFileDependency}[1]{
  \typeout{(#1)}
  \@addtofilelist{#1}
  \IfFileExists{#1}{}{\typeout{No file #1.}}
}
\newcommand*{\myexternaldocument}[1]{%
    \externaldocument{#1}%
    \addFileDependency{#1.tex}%
    \addFileDependency{#1.aux}%
}
\begin{document}

\maketitle

\begin{abstract}
Non-autonomous differential equations are crucial for modeling systems influenced by external signals, yet fitting these models to data becomes particularly challenging when the signals change abruptly. To address this problem, we propose a novel parameter estimation method utilizing functional approximations with artificial neural networks. Our approach, termed Harmonic Approximation of Discontinuous External Signals using Neural Networks (HADES-NN), operates in two iterated stages. In the first stage, the algorithm employs a neural network to approximate the discontinuous signal with a smooth function. In the second stage, it uses this smooth approximate signal to estimate model parameters. HADES-NN gives highly accurate and precise parameter estimates across various applications, including circadian clock systems regulated by external light inputs measured via wearable devices and the mating response of yeast to external pheromone signals. HADES-NN greatly extends the range of model systems that can be fit to real-world measurements.
\end{abstract}

\begin{keywords}
Non-autonomous differential equations, Discontinuous external signals, Neural networks, Parameter estimation, Non-smooth optimization
\end{keywords}

\begin{AMS}
  34C60, 92B05, 68T07, 93C15, 65K10
\end{AMS}

\section{Introduction}
Non-autonomous differential equations are used widely to model the evolution of natural and engineered systems subject to varying inputs and perturbations. Such models can provide a mechanistic understanding of medical and biological systems and predict their behavior (e.g., vaccine strategies in epidemic dynamics~\cite{du2017evolution,zinsstag2017vaccination} and cancer treatment approaches~\cite{farrokhian2022measuring}). Often, systems in nature are subject to abrupt changes in their inputs, such as the light intensity driving circadian pacemakers~\cite{forger1999simpler,lim2025enhanced}, pheromone signals in the mating response of yeast~\cite{pomeroy2021predictive}, and on-off switches in electrical circuits~\cite{zhao2022parameter}. However, these abrupt changes make non-autonomous models difficult to apply in practice.

In particular, fitting models with discontinuous inputs to data presents unresolved mathematical and computational challenges. Specifically, the presence of discontinuous inputs results in non-smooth loss functions, causing the failure of popular gradient descent-based optimization algorithms ~\cite{wang2022differentially} (e.g., Sequential Least Squares Quadratic Programming (SLSQP~\cite{kraft1988software}), Limited-memory Broyden–Fletcher–Goldfarb–Shanno algorithm with bound support (L-BFGS~\cite{zhu1997algorithm}), Levenberg-Marquardt method (LM~\cite{more2006levenberg}), and Neural Ordinary Differential Equations (NeuralODE~\cite{chen2018neural})). While direct search methods (e.g., Nelder-Mead (NM~\cite{gao2012implementing}) and differential evolution (DE~\cite{price2006differential})) do not use the gradient of the loss function, they often fail to converge in finite time because loss functions are non-smooth~\cite{garmanjani2013smoothing}. Artina \textit{et al.} proposed iterative smoothing and adaptive parametrization techniques to address non-smooth optimization problems~\cite{artina2013linearly}. However, this method is tailored for problems with linear constraints and is not directly applicable to optimization problems constrained by dynamic systems.

Recently, artificial neural networks have been used to fit a model of an electric circuit subject to discontinuous forcing to data~\cite{zhao2022parameter}. However, this approach requires extensive observations whenever the input to the circuit changes. The application of physics-informed neural networks has also been proposed to infer the underlying forcing term~\cite{song2024admm}, but only when the governing models are explicitly provided. Thus, a general, computationally efficient approach to fitting models with discontinuous inputs remains elusive.

Here, we present a novel method, the Harmonic Approximation of Discontinuous External Signals using Neural Networks (HADES-NN), to address this challenge. HADES-NN employs neural networks to approximate discontinuous inputs with a sequence of smooth functions \cite{imaizumi2019deep,adcock2021gap}. These approximate smooth functions are then used to replace the discontinuous input in the non-autonomous system, and the resulting sequence of systems is fitted to the data. Hence, HADES-NN minimizes a sequence of smooth loss functions instead of a single, non-smooth loss function. We prove that the sequence of minima produced by HADES-NN converges to the minimum of the original, non-smooth loss function, yielding accurate parameter estimates even with data is sparse. We illustrate the accuracy of HADES-NN using various examples: a Lotka-Volterra model driven by discontinuous seasonal input, a circadian pacemaker model driven by a discontinuous light signal measured using wearable devices, and a gene regulatory network model in mating yeast subject to abruptly changing pheromone levels. Our results indicate that HADES-NN is a flexible and computationally efficient method for fitting models with discontinuous inputs, greatly extending the range of model systems for which parameters can be estimated. By providing a robust and efficient solution to a long-standing problem, HADES-NN paves the way for more precise and reliable modeling of complex, dynamic systems. This advance has the potential to impact various fields, including biology, engineering, and environmental science, where accurate modeling of systems with discontinuous inputs is crucial.

\section{Main results}
\subsection{Leading optimization algorithms fail to infer parameters of a simple system with a discontinuous input}

\begin{figure}[!t]
	\centering
	\includegraphics[width=\columnwidth]{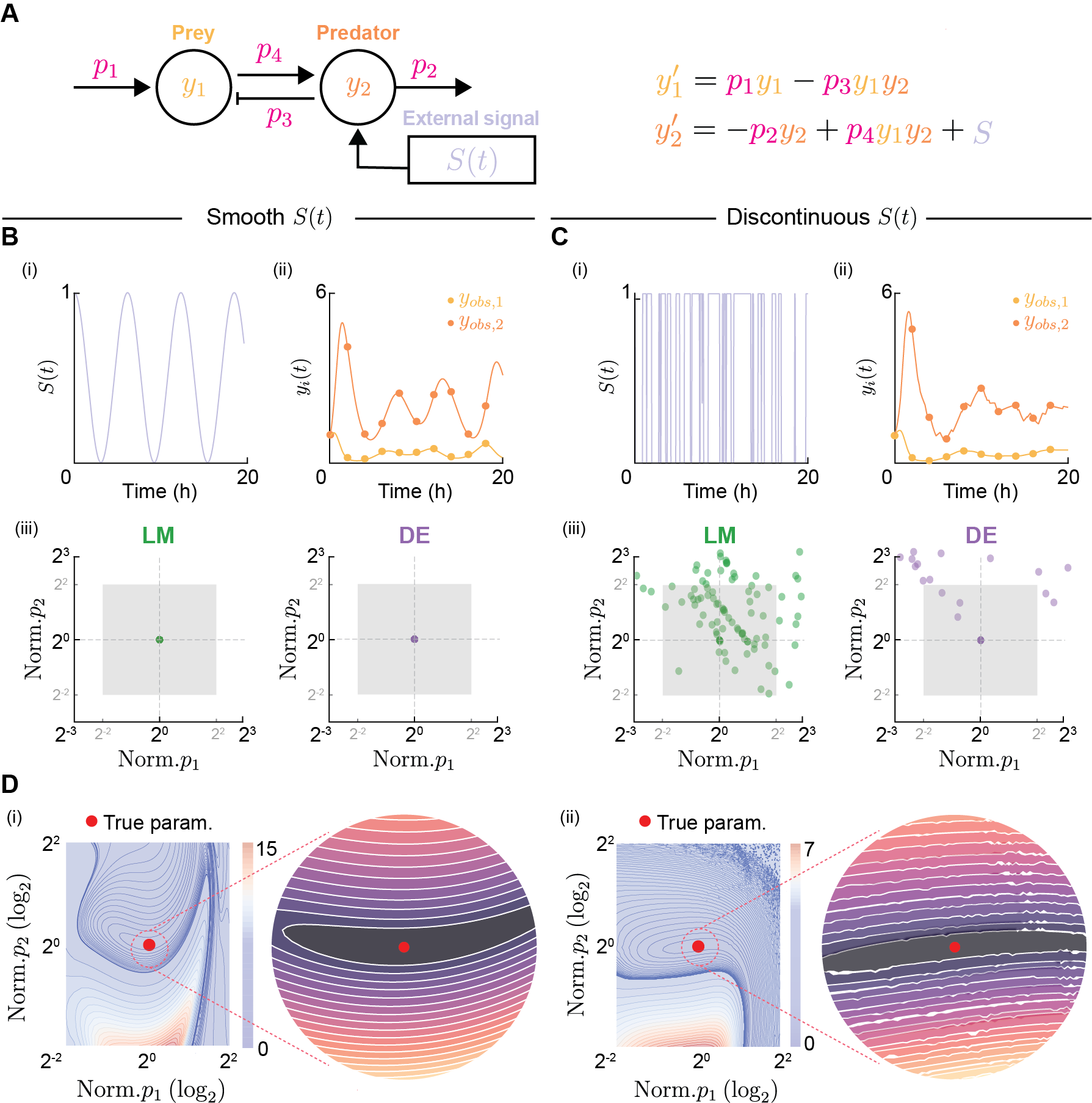}%
	\label{figure1}
	\caption{Parameter inference for the Lotka-Volterra system in a changing environment. (A) The Lotka-Volterra model with input $S(t)$ describes the interaction between prey ($y_1$) and predator ($y_2$) in a changing environment ($S(t)$). (B) When $S(t)$ is smooth (i), so is the corresponding numerical solution of the non-autonomous model (ii). In this case, the Levenberg-Marquardt (LM) and differential evolution (DE) algorithms successfully recover the parameters of the model from as few as 10 observations (iii). Here, the gray box represents the range of parameters from which initial values were randomly selected for optimization. Estimates were normalized by dividing with the true parameter values, located at the intersection of the dashed lines. The green and purple points indicate the final parameter estimates obtained using the LM and DE algorithms, respectively, each from independent random initializations (gray boxes). (C) In contrast, if $S(t)$ changes discontinuously (i), the resulting solution is not differentiable (ii). In this case, both LM and DE produce inaccurate parameter estimates (iii). The discontinuous signal $S(t)$ was generated using a continuous Markov process with a transition rate equal to 0.05 between the states $S(t)=0$ and $S(t)=1$ (See Fig. 5 for details). (D) The failure of these two standard optimization algorithms can be explained by the roughness of the loss landscape defined as the $L_2$ norm of the difference between the measurements and the numerical solution of the model system at different parameter values: With a smooth $S(t)$, the contours of the loss landscape are also smooth. However, with discontinuous input, $S(t),$ the contour lines are no longer smooth, resulting in numerous local minima.}
    \vspace{-2em}
\end{figure}

The Lotka-Volterra system describes the interaction between prey, $y_1 (t),$ and predator, $y_2 (t)$, species (Fig. 1A). Environmental impacts on the prey species, such as seasonal variations or human activity (e.g., hunting regulations), can be described by a signal, $S(t)$, system~\cite{sager2006numerical}. With a smooth signal, e.g., $S(t)=\frac{1}{2}\left(\cos{t}+1\right)$, the solution, $\Vec{y}(t)=(y_1(t),y_2(t))$ is also smooth (Fig. 1B, i-ii). Using ten observations from the simulated trajectories (dots in Fig. 1B, ii), we inferred the values of four parameters: $p_1$, the birth rate of prey, $p_2$, the death rate of predators, $p_3$, a measure of the effect of predators on prey growth, and $p_4,$ a measure of the effect of prey on predator growth (see Table I for the values of the parameters used to generate the data). Specifically, we estimated these four parameters with different initial guesses, randomly sampled from the volume $[\frac{p_i}{4},4p_i]$, $i=1,2,3,4$ (Fig. 1B, iii, gray regions), using a gradient-descent-based algorithm (LM) and a direct search algorithm (DE). Repeating this estimation procedure 100 times shows that both methods reliably produce accurate and precise parameter estimates near the true parameter values (Fig. 1B, iii crossed dashed lines).

However, both LM and DE fail when the input $S(t)$ is discontinuous. Specifically, when the signal $S(t)$ switches between $S=0$ and $S=1$ (Fig. 1C, i), the solution $\Vec{y}(t)=(y_1(t),y_2(t))$ becomes non-smooth (Fig. 1C, ii). Unlike the smooth input case, fitting the model using ten measurements (dots in Fig. 1C, ii) with LM and DE optimization algorithms yields inaccurate and imprecise estimates (Fig. 1C, iii).

This failure is typical of many classical optimization algorithms, which rely on some regularity of the loss function \cite{wu2017bolt, bassily2019private, feldman2020private, bassily2020stability, wang2022differentially}. To illustrate, we calculated the loss function: 
$$\left\Vert \Vec{y}_{obs}-\Vec{y}(\Vec{p})\right\Vert_2 = \left(\frac{1}{10}\sum_{j=1}^{10} \sum_{i=1}^2\left[y_{obs,i}(t_j^o)-y_i(t_j^o;\Vec{p})\right]^2\right)^{1/2},$$ 
where $\Vec{y}_{obs}(t_j^o)=(y_{obs,1}(t_j^o), y_{obs,2}(t_j^o))$ and $\Vec{y}(p)=(y_{1}(t_j^o;\Vec{p}), y_{2}(t_j^o;\Vec{p}))$ are the observations and the numerical solution of the system with the set of parameters $\Vec{p}=(p_1,p_2,p_3,p_4)$ at times $t_j^o \in [0,20]$, respectively. Although $\Vec{y}$ in fact depends on both $t$ and $S(t)$, we omit this explicit dependence for the sake of notational simplicity. For the complete expression and further details, please refer to the Methods section or Appendix. When $S(t)$ is smooth, then so is the loss landscape, and the parameters estimated with both optimization algorithms converge to the minimum of the loss function (Fig. 1D, i). However, when $S(t)$ is discontinuous, the loss function is non-smooth, and both optimization algorithms fail to identify the direction in the parameter space that minimizes the loss function and, hence, do not converge to its minimum (Fig. 1D, ii).

\subsection{HADES-NN: Neural network-based parameter estimation for non-autonomous differential equations with discontinuous signals} 

\begin{figure}
	\centering
	\includegraphics[width=\columnwidth]{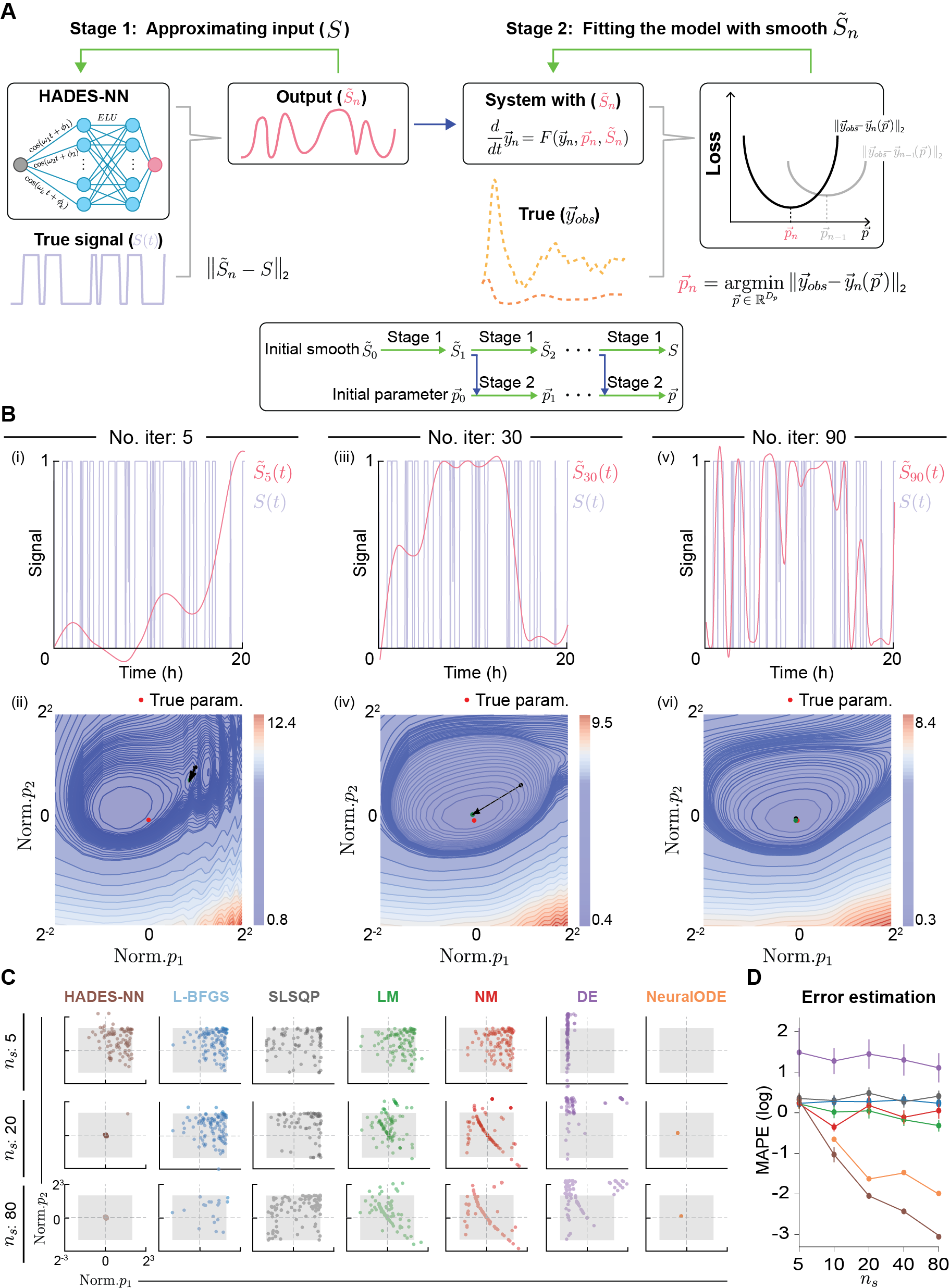}%
	\label{figure2}
        \vspace{-1em}
	\caption{Harmonic Approximation of Discontinuous External Signals using Neural Networks (HADES-NN). (A) HADES-NN iterates over two stages. The first stage generates a smooth approximation, $\tilde{S}_n(t),$ of the discontinuous external signal, $S(t),$ using a neural network (i and ii). The second stage uses LM to estimate the parameters, $\Vec{p}_{n},$ of the system $\frac{d}{dt}\Vec{y}_n(t) = F(\Vec{y}_n(t),\Vec{p}_n, \tilde{S}_n(t))$ and initial parameter values obtained in the previous iteration, $\Vec{p}_{n-1}$. Stage 1 and Stage 2 are executed sequentially in an iterative loop, where the output of each stage serves as the input for the next (green arrows). In particular, $\tilde{S}_n$ is used as the input to the system for estimating $\Vec{p}_n$ (blue arrow). (B) HADES-NN is then used to estimate the parameters of the Lotka-Volterra model, defined in Fig. 1C.  Initially, $\tilde{S}_n(t)$ is a poor approximation of the discontinuous signal $S(t)$ (i), leading to parameter estimates (ii, black arrow) far from the true values (ii, red dot). As $\tilde{S}_n$ converges to $S(t)$ over iterations, the parameter estimates converge to their true values (iv, v). (C) The scatter plots show final parameter estimates obtained using seven optimization algorithms, each run multiple times with different initial parameter values, for three different observation numbers (5, 20, and 80). Only NeuralODE and HADES-NN recover the true parameter values (intersection of dashed lines) as the number of observations increases. (D) The Mean Absolute Percentage Error (MAPE) between the true and the estimated parameters. HADES-NN produces estimates with the smallest error.}
    \vspace{-2em}

\end{figure}

A non-smooth loss landscape is known to cause the failure of many current optimization algorithms ~\cite{kraft1988software, zhu1997algorithm, more2006levenberg, chen2018neural, gao2012implementing, price2006differential}. To address this challenge, HADES-NN estimates parameters of differential equations with discontinuous inputs, $S(t),$ by first smoothing the inputs and the corresponding loss function. Specifically, HADES-NN alternates between two stages (Fig. 2A): Stage 1) approximating the discontinuous input $S(t)$ with a smooth function $\tilde{S}_n(t)$ using a neural network with cosine activations in the signal layer and Exponential Linear Unit (ELU) activations in five hidden layers (Fig. 2A, i-ii), where ELU is defined as $\text{ELU}(x) = x$ if $x > 0$, and $(e^x - 1)$ if $x \leq 0$ (See Section 3.2 for a detailed description), and Stage 2) fitting the model with the approximate smooth input $\tilde{S}_n(t)$ using LM, yielding the parameter estimate, $\Vec{p}_n$ (Fig. 2A). In the next iteration, we use the approximate input obtained in the previous step, $\tilde{S}_n(t)$, to initialize the neural network in the first signal approximation stage, and $\Vec{p}_n$ as the initial parameter guess in the second parameter estimation stage. By repeating these two stages, the neural network produces increasingly accurate, smooth approximations, $\tilde{S}_n(t),$ of the discontinuous input, $S(t)$. As the smooth approximations $\tilde{S}_n(t)$ converge to $S(t)$, the parameter estimates $\Vec{p}_n$ converge to the true minimum of the loss function (See Supplementary information for a proof of this assertion).

To illustrate its performance, we applied HADES-NN to the example Lotka-Volterra system described above (Fig. 1C). After five iterations, the approximate signal $\tilde{S}_5(t)$ obtained by the neural network was a poor approximation of the original discontinuous signal $S(t)$ (Fig. 2B, i). Hence, the resulting loss landscape differed significantly from the true loss landscape, leading to a poor approximation (Fig. 2B, ii-black arrow) of the true parameters (Fig. 2B, ii-red dot) in the LM stage of the algorithm. On the other hand, as the number of iterations increases to 30 and 90, HADES-NN produced more accurate approximations of the input $S(t)$ (Fig. 2B, iii, v), resulting in a better match between the approximate and true loss landscapes' global minimizers (Fig. 2B, iv, vi). As a result, the parameter estimates (Fig. 2B, iv, vi-black arrows) converge to their true values (Fig. 2B, iv, vi-red dots). Since smooth approximations of $S(t)$ were used at each stage of the algorithm, the approximate loss landscapes were also smooth, enabling LM to identify the minimum of the approximate, smooth loss function at each iteration step.

We compared the performance of HADES-NN with six popular optimization algorithms: Three gradient-based algorithms (L-BFGS, SLSQP, LM), two direct search-based algorithms (NM, DE), and one deep learning-based algorithm (NeuralODE). We used different numbers of observations ($n_{s} = 5, 20, 80$) of the simulated trajectories of the Lotka-Volterra system with the discontinuous input, $S(t)$ (Fig. 1A, right). Using these data, we estimated parameters with initial guesses randomly sampled 100 times from the volume $[\frac{p_i}{4},4p_i]$, $i = 1,2,3,4$ (Fig. 2C, gray regions).  

We compared the performance of HADES-NN with six popular optimization algorithms: Three gradient-based algorithms (L-BFGS, SLSQP, LM), two direct search-based algorithms (NM, DE) and one deep learning-based algorithm (NeuralODE). We used different numbers of observations ($n_{s}=5, 20, 80$) of the simulated trajectories of the Lotka-Volterra system with the discontinuous input, $S(t)$ (Fig. 1A, right). Using these data, we estimated parameters with initial guesses randomly sampled 100 times from the volume $[\frac{p_i}{4},4p_i]$, $i=1,2,3,4$ (Fig. 2C, gray regions).  

With $n_{s}=5$, all methods failed to produce accurate parameter estimates due to lack of data. With more observations ($n_{s}=20, 80$), the estimates produced with HADES-NN and NeuralODE, but not the other methods, converged to the true parameter values (intersection of the dashed lines in (Fig. 2C)). HADES-NN produced more accurate estimates compared to NeuralODE when we quantified the accuracy of estimates using the Mean Absolute Percentage Error (MAPE), representing the average relative difference between the true and estimated parameters (Fig. 2D). 

\subsection{HADES-NN accurately estimates parameters of a human circadian clock model with abruptly changing light exposure}

\begin{figure}[!t]
	\centering
	\includegraphics[width=\columnwidth]{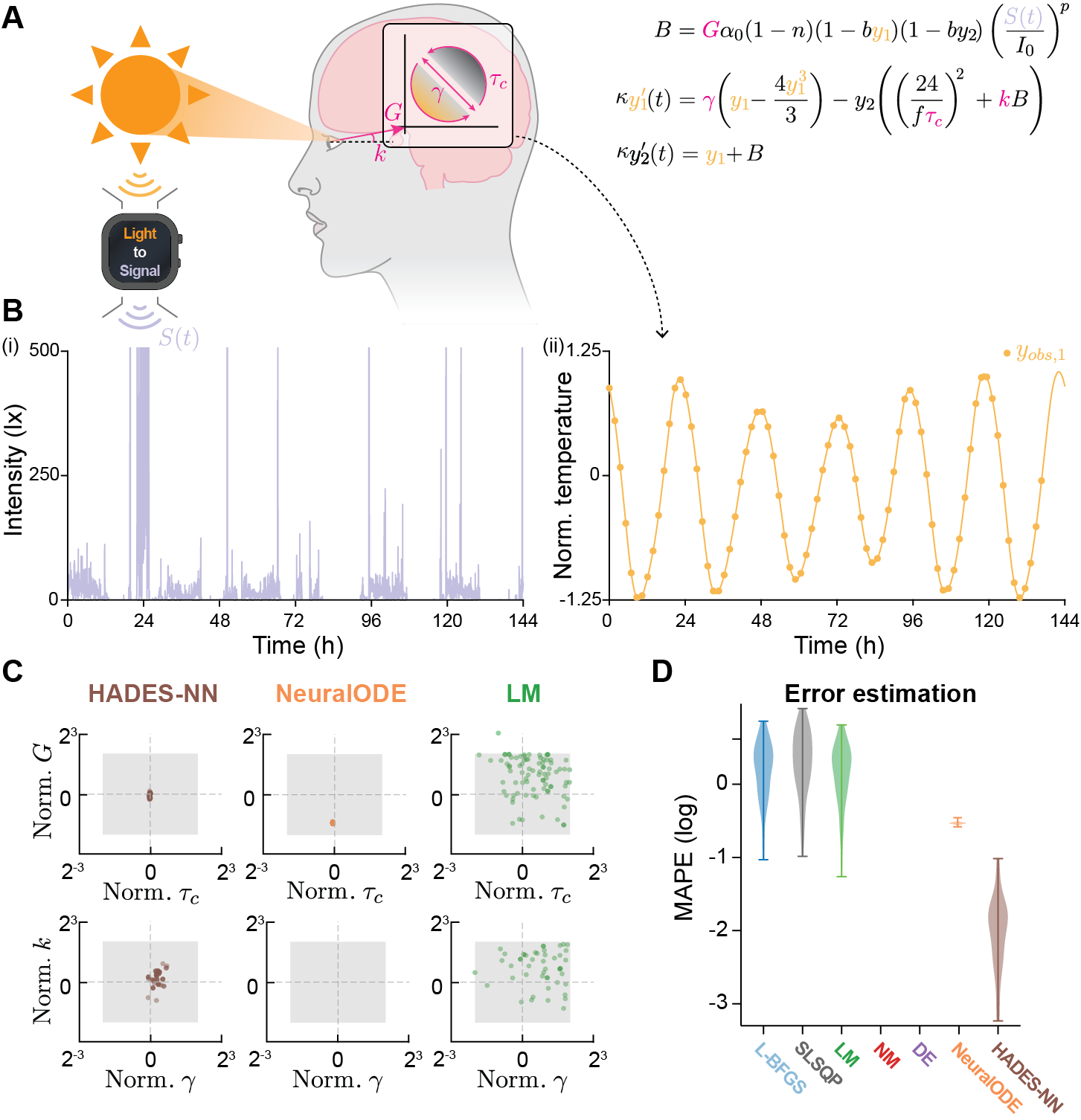}%
	\label{figure3}
	\caption{Parameter estimation for a circadian pacemaker model. (A) The circadian clock in humans is driven by light exposure $S(t)$. Variations among individuals are determined by four parameters: the circadian period $\tau_c$, oscillation stiffness $\gamma$, gain factor $G$, and relative effect $k$. (B) We obtained light exposure measurements from a wearable device (left) and generated 80 observations by solving the circadian pacemaker model with these light measurements as inputs (right). (C) The scatter plots show estimated parameters after 100 trials for each of the three methods (HADES-NN, NeuralODE, and LM), with randomly selected initial parameter values from the gray box. Only HADES-NN recovered the true values (intersection of dashed lines). (D) The MAPE between the true and estimated parameters shows that HADES-NN outperforms the other methods we tested. Direct search algorithms (NM and DE) failed to converge, so the scatter plots were not included.}
    \vspace{-2em}

\end{figure}

We next illustrate the performance of HADES-NN on a more challenging problem. To do so, we consider a human circadian pacemaker model \cite{forger1999simpler}, widely used to address human health challenges such as jet lag \cite{skeldon2017effects, christensen2020optimal}, and shift-worker sleep disorder \cite{cheng2021predicting, huang2021predicting, hong2021personalized, song2023real,song2025digital} (Fig. 3A). The internal circadian clock is perturbed by light exposure, $S(t)$, which increases the fraction of the activated photoreceptors, $n(t)$, through a driving signal modeled by the term $B(t)$. This signal modulates the activity of pacemaker neurons, $y_2(t)$, and the dynamics of the auxiliary variable, $y_1(t)$. While $y_2(t)$ is not observable, $y_1(t)$ is proportional to core body temperature \cite{forger1999simpler}. Therefore, we assume that the state of the system is not fully observable and only used measurements of $y_1(t)$ as data (Fig. 3B, ii).

The dynamics of the circadian clock are determined by four key parameters, which represent unique characteristics of each individual: the intrinsic circadian period ($\tau_c$); the oscillator stiffness ($\gamma$), determining the stability of the oscillator against external driving forces; the gain factor ($G$), determining how light impacts $B(t)$; and the relative effect ($k$), which denotes the ratio between the magnitudes of the influence of light on the modulations of $y_1(t)$ and $y_2(t)$. By estimating these four parameters for each individual, we can analyze the characteristics of their circadian rhythms (e.g., morning type, sensitivity to jet lag, etc.) \cite{mott2011model, bonarius2020parameter, stone2020role, skeldon2022extracting, skeldon2023method}. Previous studies used particle filters or the NM method to estimate model parameters but assumed that light exposure, $S(t)$, was unrealistically smooth \cite{mott2011model, bonarius2020parameter, skeldon2023method}.

Accurate parameter estimates using realistic inputs for individual patients are needed to enable personalized treatments of sleep disorders. To achieve this goal, we used a discontinuous profile of light exposure $S(t)$ obtained from a wearable device (Actiwatch2, Phillips Respironics, Murrysville, PA) (Fig. 3B, i). With this signal as input, we simulated the normalized core body temperature $y_1(t)$ (Fig. 3B, ii) with previously obtained parameter values, $\tau_c$, $\gamma$, $G$, and $k$ (Table II). We then sampled 80 points from the resulting trajectory as data. Using the simulated data, we estimated the four parameters with seven optimization algorithms (L-BFGS, SLSQP, LM, NM, DE, NeuralODE, and HADES-NN). We repeated this process 100 times with different initial parameter guesses, each sampled uniformly from the interval $[\frac{p_i}{4},4p_i]$, for each parameter $p_i\in\Vec{p}= (\tau_c, \gamma, G, k)$. 

Among the algorithms we tested, only HADES-NN consistently generated accurate and precise parameter estimates (Fig. 3C). In contrast, the estimates of the parameters $\gamma$ and $k$ using NeuralODE diverged outside the range $[2^{-3},2^3]$, indicating its failure to recover the true parameter values with a model more complex than the Lotka-Volterra system (Fig. 3C) (See supplementary information for estimation results of other algorithms, Fig. S1). HADES-NN also outperformed the other algorithms when we quantified the accuracy of estimates using MAPE (Fig. 3D). In particular, the direct search algorithms (NM and DE) failed to converge. This indicates that, among the algorithms we tested, only HADES-NN can be used to estimate the parameters of this circadian clock model with discontinuous input. 

\subsection{HADES-NN accurately estimates parameters of a model of mating response in yeast}
\begin{figure}[!h]
	\centering
	\includegraphics[width=\columnwidth]{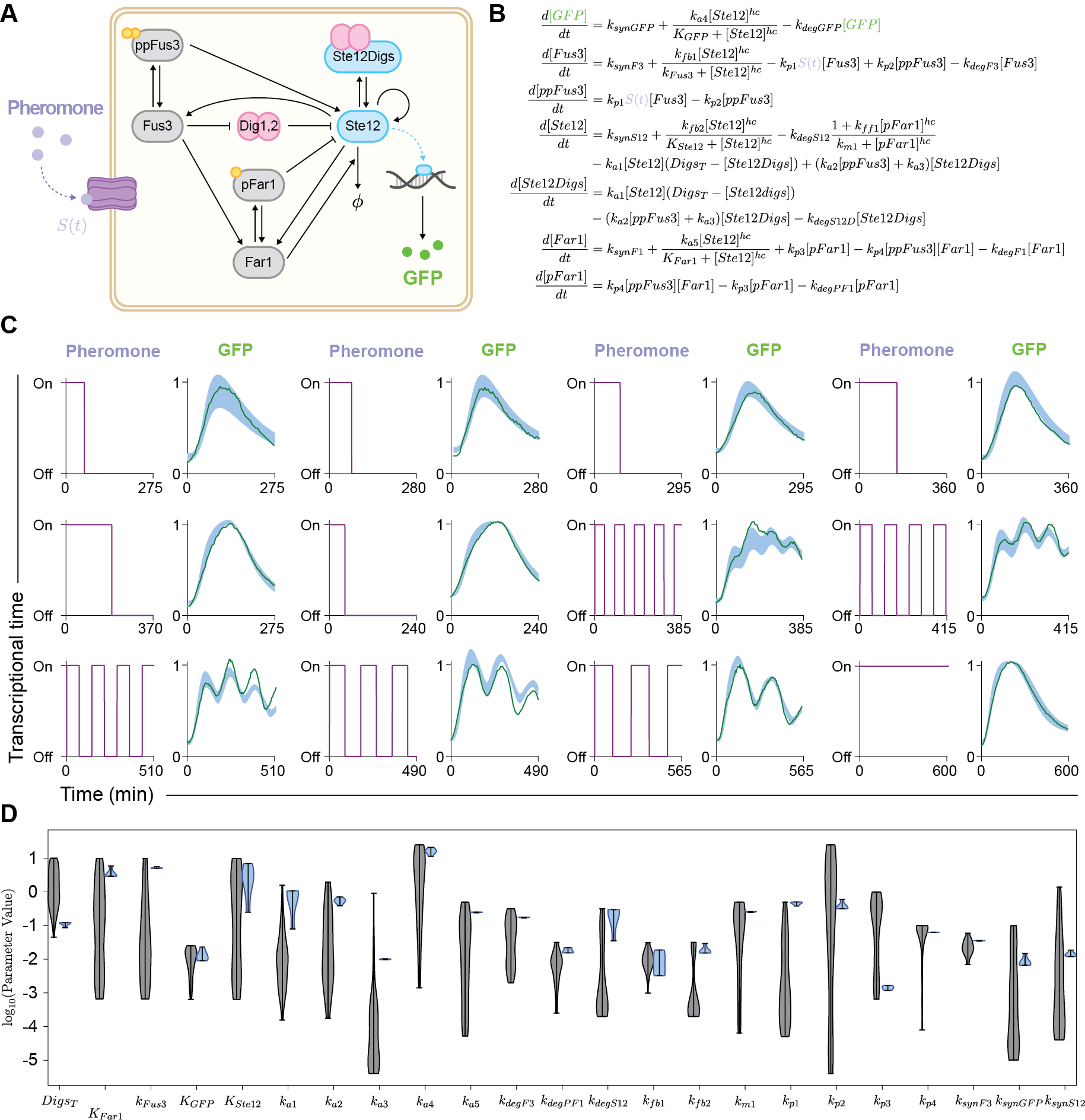}%
	\label{figure4}
	\caption{Parameter estimation in the mating response of yeast network. (A) When pheromone $S(t)$ binds to receptors on the cell surface (left, purple), it activates a signaling pathway, triggering a cascade of intracellular events. This cascade involves the phosphorylation and activation of various signaling proteins and transcription factors, such as the MAP kinase Fus3 and the transcription factor Ste12. The activation of these proteins leads to the activation of Green Fluorescent Protein (GFP) (right, green). (B) These interactions can be described by a system of differential equations \cite{pomeroy2021predictive}. (C) To estimate the parameters of the model shown in (B), we used experimental data describing how the pheromone dosage (input signal, purple) affects the amount of GFP (output signal, green). The shaded ranges represent the mean $\pm$ one standard deviation of trajectories simulated with parameters estimated by HADES-NN. (D) The gray violin plots represent the estimated ranges of parameters using an evolutionary algorithm \cite{pomeroy2021predictive}. In comparison, HADES-NN results in a more precise and significantly narrower range of parameter estimates (green violin plots).}
    \vspace{-2em}
\end{figure}

To demonstrate that HADES-NN performs well with even more complex models and experimentally measured data, we used it to estimate the parameters of a mating response network model of yeast~\cite{pomeroy2021predictive}. This model was introduced to understand the impact of pheromones on gene expression in \emph{Saccharomyces cerevisiae} (yeast) and incorporates network motifs such as incoherent feedforward loops, positive feedback loops, negative feedback loops, as well as the slow rebinding of transcriptional repressors (Fig. 4A). Specifically, pheromone with concentration denoted by $S(t)$ serves as a cell-to-cell communication signal, initiating preparation for mating in yeast cells. When pheromones bind to cell surface receptors, the signaling cascade begins, leading to the activation of the MAP kinase Fus3 (ppFus3), which then promotes several intracellular responses, propagating the pheromone $S(t)$ and activating a transcription factor (Ste12). Ste12 plays a critical role in regulating the expression of genes associated with mating and is repressed by forming a complex with Dig1 and Dig2 (Ste12Digs). In experiments, Green Fluorescent Protein (GFP) expressed in response to the degree of Ste12 activation is used as a reporter, allowing for visual confirmation of the activation of the signaling pathway. These interactions can be described by a system of seven differential equations with 23 unknown parameters (Fig. 4B). 

In a previous study, to estimate these large sets of parameters, a rich set of experimental data was generated by producing different patterns of input to the system $S(t)$ and measuring the resulting GFP response~\cite{pomeroy2021predictive} (Fig. 4C). These data were used along with initial parameter guesses selected randomly within given ranges (Table III) to estimate the model parameters. An evolutionary algorithm was run repeatedly with different initial guesses to produce 100 sets of parameter estimates~\cite{pomeroy2021predictive}. However, the estimated parameter ranges were large (Fig. 4D, gray violin plots). In comparison, HADES-NN produced more precise parameter estimates (Fig. 4D, green violin plots). Importantly, the simulated trajectories using the estimated parameters accurately captured experimental observations (Fig. 4C, blue region), supporting the accuracy of the parameters estimated with HADES-NN. The accuracy of previously estimated parameters could not be assessed as only the ranges of these parameters in Fig. 4D were provided~\cite{pomeroy2021predictive}. The accurate and precise estimates produced by HADES-NN indicate that this algorithm performs well even with complex, biophysically realistic models. This allows for better insights into the cellular responses to environmental changes, pinpointing abnormal signaling pathways associated with diseases and informing the development of targeted therapeutic strategies.
\section{Methodology}
\subsection{Description of the general non-autonomous differential equations with an external signal $S(t)$}
A system subject to an external signal, $S(t)$, can generally be described by a non-autonomous differential equation:
\begin{equation}\label{general_de}
	\frac{d}{dt}\Vec{y}(t) =F(\Vec{y}(t),\Vec{p},S(t)),
\end{equation}
where we denote the unknown set of parameters by $\Vec{p}=(p_1,...,p_{D_p})\in\mathbb{R}^{D_p}$. Our goal is to infer the parameter values, $\Vec{p}_{*},$ that minimize the distance between the solution of Eq.~(\ref{general_de}) and measurements of the system at discrete points in time. We used the $L_2$ norm ($\left\Vert \cdot \right\Vert_2$) of their difference as the loss function:
\begin{align}
	\Vec{p}_{*} & =\argmin_{\Vec{p}\in\mathbb{R}^{D_p}}\left\Vert \Vec{y}_{obs}(\cdot)-\Vec{y}(\cdot;\Vec{p},S(\cdot))\right\Vert_{L_2\left(\{ t_j^o\}_{j=1}^{N} \right)}\label{p_star_loss}\\
	& =\argmin_{\Vec{p}\in\mathbb{R}^{D_p}}\left()\frac{1}{N}\sum_{j=1}^{N} \sum_{i=1}^{D_y}\left[y_{obs,i}(t_j^o)-y_i\left(t_j^o;\Vec{p},S(t_{j}^{o})\right)\right]^2\right)^{1/2},\nonumber
\end{align}

where \( \{t_j^o\}_{j=1}^{N} \) represents the times of the observations (measurements), and \( \Vec{y}_{\text{obs}}(t_j^o) = (y_{\text{obs},1}(t_j^o), \dots, y_{\text{obs},D_y}(t_j^o)) \) and \( \Vec{y}(t_j^o; \Vec{p},S(t_j^o)) = (y_1(t_j^o; \Vec{p},S(t_j^o)), \dots, y_{D_y}(t_j^o; \Vec{p},S(t_j^o))) \) denote the observed data and the model solution of the system Eq.~(\ref{general_de}) at those times, respectively.
\subsection{Description of HADES-NN architecture}\label{hades-nn_method}

Previously proposed optimization algorithms fail to recover the true parameters because of the roughness of the loss landscape in the case of a discontinuous external signal. To resolve this issue, we constructed a sequence of \textit{smooth} signals, $\tilde{S}_n(t),$ that approximate the original discontinuous signal, $S(t)$ \cite{imaizumi2019deep}. HADES-NN employs a fully connected neural network (Fig. 2A, Stage 1) to produce a sequence of approximations, $\tilde{S}_n(t)$, of the discontinuous signal $S(t)$ at times $t^s_i$ when $S(t)$ is observed $(i=1,..., M)$. Specifically, the smooth signal $\tilde{S}_n(t)$ is constructed as follows: The input time variable $t_i^s$ is fed into the first hidden layer, consisting of 16 nodes. In this layer, each node uses a cosine activation function with varying periods $\omega_l$ and phase shifts $\phi_l$, for $l = 1,...,16$ (See also \cite{sitzmann2020implicit, tancik2020fourier}). The outputs of these 16 nodes serve as inputs to nodes in the second layer with 16 nodes. We chose ELU activation in the second layer, which is crucial in making the output of the network smooth. We subsequently construct four more hidden layers with the same structure and repeat the same process as in the second layer. Finally, the last hidden layer is mapped to a single node approximating the input, $S(t)$. Thus, the architecture of HADES-NN comprises one signal layer of size one, five hidden layers, each containing 16 nodes, and one output layer of size one. Since we train HADES-NN with Eq.~(\ref{L_theta}) via backpropagation, $\tilde{S}_n(t)$ converges to $S(t)$ as $n$ increases, with the difference measured by the mean-square distance,
\begin{equation}\label{L_theta}
	\left\Vert S(\cdot)-\tilde{S}_{n}(\cdot)\right\Vert_{L_2\left(\{t_i^s\}_{i=1}^{M}\right)}:=\frac{1}{M}\sum_{i=1}^{M} \left|S(t^s_i) - \tilde{S}_n(t^s_i) \right|^2.
\end{equation}

We next replace the external signal, $S(t),$ in the non-autonomous differential equation with the smooth approximation, $\tilde{S}_n(t),$ and fit the resulting model to data (Fig. 2A, Stage 2). Hence, in the $n^{th}$ iteration, we use the model
\begin{equation}\label{general_de_smooth}
	\frac{d}{dt}\Vec{y}_n(t) =F(\Vec{y}_n(t),\Vec{p}_n, \tilde{S}_n(t)).
\end{equation}
We then use the Levenberg-Marquardt algorithm (LM) to find the global minimizer, $\Vec{p}_n=(p_{n,1},\dots,p_{n,D_p}),$ of the loss function measuring the $L_2$ difference between the observed values $\Vec{y}_{obs}(t),$ and the numerical solutions of Eq.~(\ref{general_de_smooth}), $\Vec{y}_n(t;\Vec{p}_n,\tilde{S}(t))=(y_{n, 1}(t;\Vec{p}_n,\tilde{S}(t)),\dots,y_{n, D_y}(t;\Vec{p}_n,\tilde{S}(t)))$ (Fig. 2A, Stage 2):
\begin{align}
	\Vec{p}_n & =\argmin_{\Vec{p}\in\mathbb{R}^{D_p}}\left\Vert \Vec{y}_{obs}(\cdot)-\Vec{y}_n\left(\cdot;\Vec{p}, \tilde{S}_n(\cdot)\right)\right\Vert_{L_2\left(\{ t_j^o\}_{j=1}^{N} \right)}\label{p_n_loss}\\
	& =\argmin_{\Vec{p}\in\mathbb{R}^{D_p}}\left(\frac{1}{N}\sum_{j=1}^{N} \sum_{i=1}^{D_y}\left[y_{obs,i}(t_j^o)-y_{n,i}\left(t_j^o;\Vec{p}, \tilde{S}_n(t_{j}^{o})\right)\right]^2\right)^{1/2},\nonumber
\end{align}
where $t^o_j$ represents the time when $\Vec{y}_{obs}(t)$ is measured. To find the minimizer of Eq.~(\ref{p_n_loss}), we used randomly initialized parameters, $\Vec{p}_{0},$ for the first iteration ($n=1$). In subsequent iterations ($n>1$), we used the estimated parameters, $\Vec{p}_{n},$ obtained at the end of each iteration as the initial parameters for LM at the next iteration (Fig. 2A, Stage 2, right). We repeat this process until the difference between two consecutive estimates is sufficiently small, 

\begin{equation}\label{tolerance}
	\left(\sum_{k=1}^{D_p}\left(p_{n,k}-p_{n-1,k}\right)^2\right)^{1/2}\leq \varepsilon,
\end{equation}

where $0<\varepsilon\ll 1$ denotes a tolerance. We show that the estimates $\Vec{p}_n$ converge to the true parameter values as $\tilde{S}_n$ converges to $S(t)$ (See Supplementary information for mathematical analysis).

\section{Discussion}
Our method is based on approximating a discontinuous function with a sequence of smooth functions with increasing total variation (Fig. 2A). While this can be accomplished using several techniques, including splines, deep learning approaches offer a comparable level of effectiveness \cite{sahs2022shallow}. Notably, unlike other methods, deep learning approaches do not require adjustment between applications or even different data sets within a single application. Furthermore, by collecting a single fixed estimated parameter value at each stage (Fig. 2B, i-iii, red dots), we can extend HADES-NN to present the estimation results as a range of possible true parameter values, allowing for uncertainty quantification in estimated parameters.

In addition to discontinuous external signals, rapidly changing dynamical systems can also create non-smooth loss landscapes. For example, Krishnapriyan {\it et al.} showed that the diffusion equation with high diffusion coefficients results in non-smooth loss landscapes when neural networks are used to approximate the solution of the system \cite{raissi2019physics, krishnapriyan2021characterizing, karniadakis2021physics}. To address this issue, Krishnapriyan {\it et al.} proposed to gradually increase the diffusion coefficient from a low initial value to its final target value during training. When the diffusion coefficient is small, the loss landscape is smooth, making it easier to train the neural networks. The trained neural networks at this stage are then used as initial networks for the next stage with a larger diffusion coefficient. By iterating this process, the sequence of neural networks converges to the solution of the diffusion equation with high diffusion coefficients. While this approach focuses on the diffusion coefficient (i.e., a single fixed value), HADES-NN extends its applicability to discontinuous signals (i.e., a function over time). 

While HADES-NN utilizes numerical integration for estimating parameters in dynamical systems, deep learning-based parameter estimation methods, such as Physics-Informed Neural Networks (PINNs), have recently gained attention due to their effectiveness in handling various forms of parameter estimation (e.g., \cite{raissi2019physics, jo2019deep, hwang2020trend}). Specifically, PINNs have been successfully applied not only to constant parameter estimation but also to time-varying parameters and even to distributional parameter inference \cite{jung2020real, jo2024density}. Integrating the HADES-NN approach into an end-to-end PINN framework is expected to extend its applicability to a wider range of parameter types, potentially enhancing model generalizability across diverse dynamical systems.

\section{Conclusion}
We have shown that HADES-NN outperforms current state-of-the-art algorithms for parameter estimation in models with discontinuous inputs. Abrupt changes in the input result in non-smooth loss landscapes, leading to the failure of popular optimization algorithms with various approaches (SLSQP, LBFGS-B, LM, NM, DE, and NeuralODE). On the other hand, HADES-NN accurately estimates parameters for systems with discontinuous inputs. Its successful performance with complex models suggests its potential utility in various scientific and engineering domains.

\appendix
\section{Notation}
Let $\Vec{p}=\left( p_1, p_2, \dots, p_{D_p}\right)\in\mathbb{R}^{D_p}$ be a real-valued vector with dimension $D_{p}\geq 1$. Here, we use the $L_2$ norm (magnitude) of $\Vec{p}$ defined as,
$$\Vert\Vec{p}\Vert_2=\left(\sum_{i=1}^{D_p}p_i^{2}\right)^{1/2}.$$
For the time-dependent vector $\Vec{y}(t)=\left( y_1(t), y_2(t), ... y_{D_y}(t)\right)$ on a time interval $t\in[0, T]$, we use the $L_2$ norm of $\Vec{y}(t)$ over the time interval $[a,b]\subset[0,T]$, $\Vert\Vec{y}(\cdot)\Vert_{L^2([a,b])}$, defined as:

$$\Vert\Vec{y}(t)\Vert_2=\left(\sum_{i=1}^{D_y}y_i^2(t)\right)^{1/2},$$
$$\Vert\Vec{y}(\cdot)\Vert_{L_2([a,b])}=\left(\int_{a}^{b}\Vert\Vec{y}(v)\Vert_2^2 dv\right)^{1/2}.$$

When $\Vec{y}_{obs}(t)=\left(y_{obs,1}(t), y_{obs,2}(t), ... y_{obs,D_y}(t)\right)$ is observed at discrete time points $t^{o}_{1},\cdots,t^{o}_{N}\in[0,T]$, we use the $L_2$ norm of the vector of observed values:
$$\Vert\Vec{y}_{obs}(\cdot)\Vert_{L_2\left(\{t^{o}_{j}\}_{j=1}^{N}\right)}=\left(\frac{1}{N}\sum_{j=1}^{N} \sum_{i=1}^{D_y} y_{obs,i}^{2}(t^{o}_{j}) \right)^{1/2}.$$

\section{The parameters inferred at each stage of the outer loop of the HADES-NN converge to the true parameters} 
This section provides a mathematical proof of the claim that the sequence of parameters $\Vec{p}_n$ in Eq.~(\ref{p_n_loss}), obtained at the $n^{th}$ iteration of HADES-NN using a smooth approximation of the input, $\tilde{S}_n(t)$, converges to the minimum of the actual loss function, $\Vec{p}_{*},$ in Eq.~(\ref{p_star_loss}) under reasonable assumptions.
The proof consists of three steps showing that:

Step 1. The loss function $\left\Vert \Vec{y}_{obs}(\cdot)-\Vec{y}_n(\cdot,\Vec{p})\right\Vert_{L_2\left(\{t^{o}_{j}\}_{j=1}^{N}\right)}$ defined in Eq.~(\ref{p_n_loss}) is smooth in the parameter $\Vec{p}$ when the signal $\tilde{S}_n(t)$ is smooth. Thus at each step of the iteration, the loss function is smooth. 

Step 2. With smooth approximate input, $\tilde{S}_n(t),$ in the $n^{th}$ iteration, the optimization problem is solvable using the LM algorithm, i.e., gradient-based parameter updates can be used to reach the optimal parameter, $\Vec{p}_n=\argmin_{\Vec{p}\in\mathbb{R}^{D_p}} \left\Vert \Vec{y}_{obs}-\Vec{y}_n(\Vec{p})\right\Vert_2$.

Step 3. The sequence of minimizers, $\Vec{p}_n$, obtained in Step 2 of the algorithm converges to the minimizer of the original loss function, $\Vec{p}_{*}$.

Step 1 is an immediate result from the following proposition \ref{smoothness}. Specifically,
\begin{proposition}\label{smoothness} Let $F$ in Eq.~(\ref{general_de}) be  continuously differentiable with respect to $i$-th component $y_i$ of $\Vec{y}$ and $j$-th component $p_j$ of $\Vec{p}$, for all $i=1,\cdots,d_y$ and $j=1,\cdots,d_p$, and $\Vec{y}(0)=\Vec{y}_{0}$ does not dependnt on $\Vec{p}$. Then the partial derivative of $y_i$ with respoect to $p_j$,  $z_{ij} = \frac{\partial y_i}{\partial p_j}$, satisfy the following differential equation:
\begin{align}
    \frac{\partial z_{ij}}{\partial t} & = \frac{\partial F_i}{\partial p_j} + \sum_{k=1}^{d_y}\frac{\partial F_i}{\partial y_k}z_{kj},\nonumber\\
    z_{ij}(0)&=0. \nonumber
\end{align}
\end{proposition}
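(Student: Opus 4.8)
The plan is to work with the equivalent integral form of Eq.~(\ref{general_de}) and to differentiate in the parameter $p_j$ under the integral sign; the crux of the argument is justifying that this parameter derivative exists and that $\partial/\partial p_j$ may be interchanged with the time integral. I would first record the regularity input. Because $F$ is continuously differentiable in each component $y_i$ and each component $p_j$, and because the input $\tilde S_n$ (hence $F$ regarded as a function of $(t,\Vec{y},\Vec{p})$) is continuous in $t$ on the compact interval $[0,T]$, the classical theorem on smooth dependence of ODE solutions on parameters guarantees that the solution $y_i(t;\Vec{p})$ is itself $C^1$ in $\Vec{p}$, uniformly on $[0,T]$. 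This is precisely what makes $z_{ij}=\partial y_i/\partial p_j$ well defined and continuous, and it licenses the differentiation step that follows.

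Next, using that $\Vec{y}(0)=\Vec{y}_0$ does not depend on $\Vec{p}$, I would write the solution in integral form,
$$
y_i(t;\Vec{p}) = y_{0,i} + \int_0^t F_i\bigl(\Vec{y}(s;\Vec{p}),\Vec{p},S(s)\bigr)\,ds .
$$
Differentiating both sides in $p_j$ and moving the derivative inside the integral (justified by the previous step), the chain rule applied to the integrand gives
$$
z_{ij}(t) = \int_0^t \left( \frac{\partial F_i}{\partial p_j} + \sum_{k=1}^{d_y} \frac{\partial F_i}{\partial y_k}\, z_{kj}(s) \right) ds ,
$$
where the partials of $F_i$ are evaluated along the solution. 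Differentiating in $t$ via the fundamental theorem of calculus yields the stated linear variational system, and evaluating the integral relation at $t=0$ gives $z_{ij}(0)=0$, since the parameter-independent initial datum contributes nothing.

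The main obstacle is the first step: establishing the $C^1$ dependence on $\Vec{p}$ and thereby the validity of differentiating under the integral. If I wanted a self-contained argument rather than invoking the standard theorem, I would form the difference quotient $\Delta^h_i(t) = \bigl(y_i(t;\Vec{p}+h e_j)-y_i(t;\Vec{p})\bigr)/h$, subtract the two integral equations, expand $F_i$ to first order using the $C^1$ hypothesis, and apply Gr\"onwall's inequality on $[0,T]$ to show that $\Delta^h_i$ stays bounded and converges uniformly as $h\to 0$ to the solution $z_{ij}$ of the linear system above. The uniform continuity of the partials of $F_i$ on compact sets is what controls the Taylor remainder and drives the convergence. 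Once this uniform convergence is in hand, the chain rule and the interchange of derivative and integral used in the second step are routine, and the proposition follows.
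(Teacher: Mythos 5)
Your proposal is correct, but it relates to the paper in an unusual way: the paper does not prove this proposition itself --- its entire ``proof'' is a citation to the sensitivity-equation derivation of Dickinson and Gelinas (their Eqs.~(1)--(13)) and to Theorem 6.3.2 of Akhmet. What you have written is, in substance, the standard argument that those references contain: pass to the integral form of the ODE, differentiate under the integral in $p_j$, apply the chain rule to obtain the linear variational system
$z_{ij}(t)=\int_0^t\bigl(\partial F_i/\partial p_j+\sum_k (\partial F_i/\partial y_k)\,z_{kj}\bigr)ds$,
and read off $z_{ij}(0)=0$ from the parameter-independence of the initial datum, with the exchange of derivative and integral justified either by the classical smooth-dependence theorem or, self-containedly, by difference quotients plus Gr\"{o}nwall. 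Two remarks on your version. First, your primary route (invoke smooth dependence on parameters, then derive the variational equation) is legitimate but slightly redundant: most statements of that classical theorem already assert that the parameter derivative satisfies the variational equation, so you are partially re-deriving its conclusion; your fallback difference-quotient argument is the genuinely self-contained option and is the one that matches what the cited sources actually do. Second, you rightly flag that some regularity of $F$ in $t$ (continuity, supplied here by the smooth $\tilde S_n$) is needed beyond the proposition's stated componentwise $C^1$ hypotheses --- a point the paper leaves implicit. What your approach buys is a self-contained proof in which the role of each hypothesis is visible; what the paper's citation buys is brevity.
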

\begin{proof}
     See Eqs. (1–13) in \cite{dickinson1976sensitivity} (or Theorem 6.3.2 in \cite{akhmet2010principles} for a more general formulation.
\end{proof}
Since $\tilde{S}_{n}(t)$ is a smooth function, the resulting vector field $F$ is also smooth. Hence, by Proposition~\ref{smoothness}, the partial derivative $\frac{\partial y_i}{\partial p_j}$ exists for all $t > 0$, which establishes the validity of Step 1.

Step 2 also follows from the following Proposition \ref{global_LM} (Theorem 3.1 in Bergou et al. \cite{bergou2020convergence}). 
\begin{proposition}\label{global_LM} Given tolerance  $\varepsilon\in (0,1)$, the index $n_\varepsilon$ of the first iteration satisfying the following stopping criteria:
$$\left\|\nabla_{\Vec{p}} \left\Vert \Vec{y}_{obs}(\cdot)-\Vec{y}_n(\cdot,\Vec{p}_{n_\varepsilon})\right\Vert_{L_2\left(\{t^{o}_{j}\}_{j=1}^{N}\right)}\right\|_2\leq\varepsilon$$
is bounded by:
$$n_\varepsilon \leq C\varepsilon^{-2}$$
where $C$ is s a problem-dependent constant determined by the initial value, the model reduction acceptance threshold, and the sufficient decrease constant.
\end{proposition}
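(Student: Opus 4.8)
The plan is to cast the inner LM iteration as a regularized Gauss--Newton / trust-region method for the nonlinear least-squares objective and then verify that the hypotheses of the cited complexity theorem (Theorem~3.1 of \cite{bergou2020convergence}) hold in our setting, so that the $O(\varepsilon^{-2})$ bound transfers verbatim. Concretely, I would fix the iteration index $n$ and the smooth signal $\tilde{S}_n$, collect the pointwise residuals $r_{ij}(\Vec{p}) = y_{obs,i}(t_j^o) - y_{n,i}(t_j^o;\Vec{p},\tilde{S}_n(t_j^o))$ into a single residual vector $r(\Vec{p})$, and work with the least-squares objective $\phi(\Vec{p}) = \frac{1}{2}\|r(\Vec{p})\|_2^2$, a fixed positive multiple of the square of the loss in Eq.~(\ref{p_n_loss}). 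The first step is to record the three structural facts the complexity argument needs: that $\phi \ge 0$ is bounded below; that the residual Jacobian $J(\Vec{p})$ exists and is continuous; and that $\nabla\phi = J^{\top} r$ is Lipschitz on the initial sublevel set. The existence and continuity of $J$ is exactly where the smoothing pays off --- since $\tilde{S}_n$ is smooth, $F$ is smooth, and Proposition~\ref{smoothness} provides the sensitivity equations whose solutions $z_{ij} = \partial y_i/\partial p_j$ furnish the columns of $J$.

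With this reduction in place, I would reproduce the standard per-iteration mechanics. At iterate $\Vec{p}_k$ the method minimizes the regularized model $m_k(s) = \phi(\Vec{p}_k) + \nabla\phi(\Vec{p}_k)^{\top}s + \frac{1}{2}s^{\top}(J_k^{\top}J_k + \gamma_k I)s$ and accepts the step when the ratio of actual to predicted reduction meets the acceptance threshold $\eta$, decreasing $\gamma_k$ on success and increasing it on failure. The key lemmas to establish are a predicted-reduction lower bound of the form $m_k(0) - m_k(s_k) \ge c\,\|\nabla\phi(\Vec{p}_k)\|_2^2/(\|J_k\|_2^2 + \gamma_k)$, and a uniform upper bound $\gamma_k \le \gamma_{\max}$, obtained by showing that once $\gamma_k$ exceeds a fixed multiple of the Lipschitz constant of $\nabla\phi$ the step is guaranteed successful. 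Together these yield a sufficient decrease on every successful iteration, $\phi(\Vec{p}_k) - \phi(\Vec{p}_{k+1}) \ge \kappa\,\|\nabla\phi(\Vec{p}_k)\|_2^2$.

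The complexity count then follows by summation. As long as the stopping test is not triggered we have $\|\nabla\phi(\Vec{p}_k)\|_2 > \varepsilon$ (up to the fixed constant relating $\phi$ to the loss), so each successful iteration decreases $\phi$ by at least $\kappa\varepsilon^2$; telescoping against $\phi \ge 0$ caps the number of successful iterations by $\phi(\Vec{p}_0)\,\kappa^{-1}\varepsilon^{-2}$. Because $\gamma_k$ is confined to $[\gamma_{\min},\gamma_{\max}]$ and moves only by fixed multiplicative factors, a standard amortization bounds the unsuccessful iterations by a constant multiple of the successful ones, giving $n_\varepsilon \le C\varepsilon^{-2}$ with $C$ depending only on $\phi(\Vec{p}_0)$ (the initial value), $\eta$ (the model-reduction acceptance threshold), and $\kappa$ (the sufficient-decrease constant) --- exactly the problem-dependent quantities named in the statement.

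I expect the main obstacle to be establishing the uniform regularization bound $\gamma_k \le \gamma_{\max}$, since it simultaneously underpins the per-step decrease and the control of unsuccessful iterations and rests on Lipschitz continuity of $\nabla\phi$ on the sublevel set $\{\phi \le \phi(\Vec{p}_0)\}$; verifying that this set is bounded, or at least that the Lipschitz constant is uniform there, is the analytic crux, and it is precisely the property that the discontinuous signal destroys and that the smooth surrogate $\tilde{S}_n$ restores through Proposition~\ref{smoothness}. A secondary point to handle with care is the gap between the gradient of the loss --- an $L_2$ \emph{norm} --- appearing in the stopping criterion and the gradient of the squared objective $\phi$ that LM actually drives down: away from exact interpolation these differ only by the positive factor $\|r(\Vec{p})\|_2$, so the $\varepsilon$-criterion transfers, but near zero residual that factor degenerates and one must either bound the residual away from zero or phrase the criterion directly in terms of $\phi$.
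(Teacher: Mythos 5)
The paper offers no proof of this proposition at all: it is stated purely as a citation of Theorem 3.1 in Bergou et al.\ \cite{bergou2020convergence}, with the applicability of that theorem resting implicitly on Step 1 (Proposition~\ref{smoothness}), which guarantees the loss is differentiable once $\tilde{S}_n$ is smooth. Your proposal does strictly more than the paper: you verify the hypotheses the cited theorem needs (boundedness below, existence and continuity of the residual Jacobian via the sensitivity equations, Lipschitz continuity of $\nabla\phi$ on the initial sublevel set) and then reconstruct the complexity argument itself --- predicted-reduction lower bound, uniform cap $\gamma_k \leq \gamma_{\max}$ on the regularization parameter, per-iteration sufficient decrease, telescoping against $\phi \geq 0$, and amortization of unsuccessful steps. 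That reconstruction follows the standard trust-region/LM complexity template and is consistent with the cited result, so your route is correct; it buys self-containedness and makes explicit exactly which regularity conditions the smoothing stage must deliver, which the paper leaves unexamined. Two of your observations identify real gaps in the paper's statement rather than in your own argument: first, the Lipschitz-gradient condition on the sublevel set is not automatic (the parameter domain is all of $\mathbb{R}^{D_p}$, and nothing in the paper bounds the sublevel set), so the citation does not apply ``verbatim'' without the verification you sketch; second, the stopping criterion in the proposition is written with the gradient of the \emph{unsquared} $L_2$ loss, whereas the LM analysis controls $\nabla\phi = J^{\top}r$ for the squared objective --- the two gradients differ by the factor $\left\Vert r(\Vec{p})\right\Vert_2$, which degenerates (and the unsquared norm ceases to be differentiable) at exact interpolation. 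Your proposed repairs --- bounding the residual away from zero or restating the criterion in terms of $\phi$ --- are the correct ways to close that mismatch, which the paper silently ignores.
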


For the Step 3, we provide a direct proof under the following assumptions:

\begin{assumption}\label{assumption1} (Identifiability of the parameter) Given the signal $S(t)$, the minimizer $\Vec{p}_{*}$ of Eq.~(\ref{p_star_loss}) is unique.
\end{assumption}
\begin{assumption}\label{assumption2} (Regularity of $F$ in Eq.~(\ref{general_de})) The function $F$ defined in Eq.~(\ref{general_de}) is Lipschitz continuous with respect to $\Vec{y}(t)$ and $S(t)$. That is, there exists two constants $C_{1}$ and $C_{2}$ such that
$$\left\Vert F(\Vec{y}_{1}(t),\Vec{p},S(t))-F(\Vec{y}_{2}(t),\Vec{p},S(t))\right\Vert_2\leq C_{1} \left\Vert \Vec{y}_{1}(t)-\Vec{y}_{2}(t)\right\Vert_2, $$
$$\left\Vert F(\Vec{y}(t),\Vec{p},S_{1}(t))-F(\Vec{y}(t),\Vec{p},S_{2}(t))\right\Vert_2 \leq C_{2} |S_{1}(t)-S_{2}(t)|, $$
for all fixed time points $t\in[0,T]$.
\end{assumption}
Under these two assumptions, we have

\begin{lemma}\label{difference_wrt_S} Let $\Vec{p}\in\mathbb{R}^{D_p}$ be a fixed vector (parameter), and $y_1$ and $y_2$ be two solutions of Eq.~(\ref{general_de}) with $(\Vec{p},S_1(t))$ and $(\Vec{p},S_2(t))$, respectively. Assume that the two solutions have the same initial condition $\Vec{y}_{1}(0) = \Vec{y}_{2}(0)=\Vec{y}_{o}(0)$. That is,
\begin{align}
    \frac{d}{dt}\Vec{y}_{1}(t) &= F(\Vec{y}_{1}(t),\Vec{p},S_{1}(t)), \label{de_y1}\tag{7}\\
    \frac{d}{dt}\Vec{y}_{2}(t) &= F(\Vec{y}_{2}(t),\Vec{p},S_{2}(t)). \label{de_y2}\tag{8}
\end{align}
If the two input signals, $S_1(t)$ and $S_2(t)$, are Riemann integrable, then:
\begin{equation}\label{ineq_diff_y}\tag{9}
    \left\Vert \Vec{y}_{1}-\Vec{y}_{2}\right\Vert_{{L_2([0,T])}} \leq C(T) \left\Vert S_{1}-S_{2}\right\Vert_{{L_2([0,T])}},
\end{equation}
where $C(T)$ is a constant depending only on $T$ and the Lipschitz constants of $F$.
\end{lemma}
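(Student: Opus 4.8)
The plan is to run a Grönwall argument on the pointwise quantity $u(t):=\Vert \Vec{y}_1(t)-\Vec{y}_2(t)\Vert_2$ and then upgrade the resulting estimate to an $L_2$ bound via Cauchy--Schwarz. First I would rewrite (\ref{de_y1})--(\ref{de_y2}) in integral form and subtract them, using the shared initial condition $\Vec{y}_1(0)=\Vec{y}_2(0)$ to eliminate the boundary term, giving
$$\Vec{y}_1(t)-\Vec{y}_2(t)=\int_0^t\bigl[F(\Vec{y}_1(s),\Vec{p},S_1(s))-F(\Vec{y}_2(s),\Vec{p},S_2(s))\bigr]\,ds.$$
The Riemann integrability of $S_1$ and $S_2$ assumed in the statement guarantees that this integrand is integrable, so the step is legitimate even though the signals need not be continuous.

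The second step is to split the integrand by inserting the intermediate term $F(\Vec{y}_2(s),\Vec{p},S_1(s))$ and to apply the two Lipschitz bounds of Assumption~\ref{assumption2} separately: the first bound controls the state difference at the fixed signal $S_1$, and the second controls the signal difference at the fixed state $\Vec{y}_2$. Taking norms and using the triangle inequality under the integral sign yields the scalar integral inequality
$$u(t)\le C_1\int_0^t u(s)\,ds+C_2\int_0^t |S_1(s)-S_2(s)|\,ds.$$
Because the final term is nondecreasing in $t$, the integral form of Grönwall's inequality applies with forcing $\alpha(t)=C_2\int_0^t|S_1-S_2|\,ds$ and rate $C_1$, producing the pointwise bound
$$u(t)\le C_2\,e^{C_1 t}\int_0^t |S_1(s)-S_2(s)|\,ds.$$

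The main work, and the only genuinely delicate point, is converting this pointwise estimate---whose signal dependence enters through the $L_1$-type quantity $\int_0^t|S_1-S_2|\,ds$---into the $L_2$-in-the-signal bound claimed in (\ref{ineq_diff_y}). The bridge is Cauchy--Schwarz, $\int_0^t|S_1-S_2|\,ds\le\sqrt{T}\,\Vert S_1-S_2\Vert_{L_2([0,T])}$, uniformly in $t\in[0,T]$. Squaring the pointwise bound, substituting this inequality, and integrating over $[0,T]$ gives $\int_0^T u(t)^2\,dt\le C_2^2\,T\,\Vert S_1-S_2\Vert_{L_2([0,T])}^2\int_0^T e^{2C_1 t}\,dt$. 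Evaluating $\int_0^T e^{2C_1 t}\,dt=(e^{2C_1 T}-1)/(2C_1)$ and taking square roots yields (\ref{ineq_diff_y}) with the explicit constant $C(T)=C_2\sqrt{T}\,\bigl((e^{2C_1 T}-1)/(2C_1)\bigr)^{1/2}$, which depends only on $T$, $C_1$, and $C_2$, exactly as required. I expect the Cauchy--Schwarz step to be the crux, since it is what reconciles the $L_1$ bound naturally produced by Grönwall with the $L_2$ norm appearing in the statement; the remaining points---validity of the Grönwall step under mere Riemann integrability and the uniform-in-$t$ application of Cauchy--Schwarz---are routine, so the estimate closes cleanly. (If a sharper constant is not needed, one may instead bound $e^{C_1 t}\le e^{C_1 T}$ before integrating, giving the simpler $C(T)=C_2\,T\,e^{C_1 T}$.)
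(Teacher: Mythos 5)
Your proof is correct, and it rests on the same pillars as the paper's own argument: writing the difference $\Vec{y}_1-\Vec{y}_2$ in integral form, inserting a cross term so that the two Lipschitz bounds of Assumption~\ref{assumption2} can be applied separately, invoking Gr\"{o}nwall's inequality, and using a Cauchy--Schwarz (H\"{o}lder) step to reach the $L_2$ norm of $S_1-S_2$. The only structural difference is the order of operations: the paper squares the integral identity first, applies H\"{o}lder to push the square inside the time integral, and then runs Gr\"{o}nwall on the squared quantity $\left\Vert \Vec{y}_1(t)-\Vec{y}_2(t)\right\Vert_2^2$ with an $L_2$-type forcing term, whereas you run Gr\"{o}nwall directly on the unsquared quantity with an $L_1$ forcing and only afterwards upgrade to $L_2$ via Cauchy--Schwarz. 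Both orderings are valid; yours has the modest advantage of producing a fully explicit constant $C(T)=C_2\sqrt{T}\,\bigl((e^{2C_1T}-1)/(2C_1)\bigr)^{1/2}$, while the paper leaves the final Gr\"{o}nwall step and the resulting constant implicit. (The choice of cross term, $F(\Vec{y}_2,\Vec{p},S_1)$ in your decomposition versus $F(\Vec{y}_1,\Vec{p},S_2)$ in the paper's, is immaterial by symmetry.)
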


Notably, the right hand side in ~\cref{difference_wrt_S} is independent of the vector $\Vec{p}$. 

\begin{proof}
    Substracting Eq.~(\ref{de_y1}) from Eq.~(\ref{de_y2}) and integrating over $[0, t]$, yields the following relationship
\begin{align}
    \int_{0}^{t}\frac{d}{dv}\left(y_1(v)-y_2(v)\right)dv &= \int_{0}^{t}\left(F(\Vec{y}_{1}(v),\Vec{p},S_{1}(v)) - F(\Vec{y}_{1}(v),\Vec{p},S_{2}(v)) \right)dv \nonumber \\
    &+ \int_{0}^{t}\left(F(\Vec{y}_{1}(v),\Vec{p},S_{2}(v)) - F(\Vec{y}_{2}(v),\Vec{p},S_{2}(v))\right)dv\nonumber
\end{align}
Since $\int_{0}^{t}\frac{d}{dv}\left(y_1(v)-y_2(v)\right)dv=\left(\Vec{y}_{1}(t)-\Vec{y}_{2}(t)\right) - \left(\Vec{y}_{1}(0)-\Vec{y}_{2}(0)\right)$ and $\Vec{y}_{1}(0)=\Vec{y}_{2}(0)$ (by assumption), we have
\begin{align}
    \Vec{y}_{1}(t)-\Vec{y}_{2}(t) &= \int_{0}^{t}\left(F(\Vec{y}_{1}(v),\Vec{p},S_{1}(v)) - F(\Vec{y}_{1}(v),\Vec{p},S_{2}(v)) \right)dv \nonumber \\
    &+ \int_{0}^{t}\left(F(\Vec{y}_{1}(v),\Vec{p},S_{2}(v)) - F(\Vec{y}_{2}(v),\Vec{p},S_{2}(v))\right)dv.\nonumber
\end{align}
Next, we take the absolute value, square both sides, and use the inequality $(a+b)^{2}\leq 2 (a^2+b^2)$ on the right hand to obtain,
    \begin{align}
    & \left\Vert \Vec{y}_{1}(t)-\Vec{y}_{2}(t)\right\Vert_2 ^2 \label{ineq_F}\tag{10} \\
    &\leq 2 \left\Vert \int_{0}^{t}\left[F(\Vec{y}_{1}(v),\Vec{p},S_{1}(v)) - F(\Vec{y}_{1}(v),\Vec{p},S_{2}(v)) \right]dv\right\Vert_2^2 \nonumber \\
    &+ 2 \left\Vert\int_{0}^{t}\left[F(\Vec{y}_{1}(v),\Vec{p},S_{2}(v)) - F(\Vec{y}_{2}(v),\Vec{p},S_{2}(v))\right]dv\right\Vert_2^2. \nonumber
\end{align}
For the first term in the right hand side in Eq.~(\ref{ineq_F}), recall that the definition of $\|\cdot\|_2$,
\begin{align}
    & \left\Vert \int_{0}^{t}\left[F(\Vec{y}_{1}(v),\Vec{p},S_{1}(v)) - F(\Vec{y}_{1}(v),\Vec{p},S_{2}(v)) \right]dv\right\Vert_2^2\nonumber\\
    & = \sum_{i=1}^{D_y}\left(\int_{0}^{t}\left[F_{i}(\Vec{y}_{1}(v),\Vec{p},S_{1}(v)) - F_{i}(\Vec{y}_{1}(v),\Vec{p},S_{2}(v)) \right] dv\right)^2.\tag{11}\label{ineq_holder}
\end{align}
By H\"{o}lder's inequality $\left( \int_{0}^{t} |f(v)\cdot g(v)| dv \leq \left(\int_{0}^{1}f^2(v)dv\right)^{1/2}\cdot\left(\int_{0}^{1}g^2(v)dv\right)^{1/2}\right)$ inside each square term, we have
\begin{align}
    & \int_{0}^{t}1\cdot\left[F_{i}(\Vec{y}_{1}(v),\Vec{p},S_{1}(v)) - F_{i}(\Vec{y}_{1}(v),\Vec{p},S_{2}(v))\right]dv\nonumber\\
    & \leq \left(\int_{0}^{t} 1 dv\right)^{1/2}\cdot\left(\int_{0}^{t}\left[F_{i}(\Vec{y}_{1}(v),\Vec{p},S_{1}(v)) - F_{i}(\Vec{y}_{1}(v),\Vec{p},S_{2}(v))\right]^2dv\right)^{1/2}\nonumber\\
    &\leq T^{1/2}\left(\int_{0}^{t}\left[F_{i}(\Vec{y}_{1}(v),\Vec{p},S_{1}(v)) - F_{i}(\Vec{y}_{1}(v),\Vec{p},S_{2}(v))\right]^2dv\right)^{1/2}.\nonumber
\end{align}
By applying the above inequality into Eq.~(\ref{ineq_holder}),
\begin{align}
    & \left\Vert \int_{0}^{t}\left[F(\Vec{y}_{1}(v),\Vec{p},S_{1}(v)) - F(\Vec{y}_{1}(v),\Vec{p},S_{2}(v)) \right]dv\right\Vert_2^2\nonumber\\
    & \leq \sum_{i=1}^{D_y}T\int_{0}^{t}\left[F_{i}(\Vec{y}_{1}(v),\Vec{p},S_{1}(v)) - F_{i}(\Vec{y}_{1}(v),\Vec{p},S_{2}(v))\right]^2dv\nonumber\\
    &= T \int_{0}^{t}\sum_{i=1}^{D_y}\left[F_{i}(\Vec{y}_{1}(v),\Vec{p},S_{1}(v)) - F_{i}(\Vec{y}_{1}(v),\Vec{p},S_{2}(v))\right]^2dv\nonumber\\
    &= T \int_{0}^{t}\left\Vert F(\Vec{y}_{1}(v),\Vec{p},S_{1}(v)) - F(\Vec{y}_{1}(v),\Vec{p},S_{2}(v)) \right\Vert_2 ^2 dv \nonumber\\
    & \leq C_2 T \int_{0}^{t}\left|S_{1}(v)-S_{2}(v)\right|^2 dv \label{ineq_S}\tag{12}
\end{align}
Here, the last inequality in Eq.~(\ref{ineq_S}) can be derived from Assumption ~\ref{assumption2} (Lipschitz continuity of $F$ with respect to $S$).

Similarly, we obtain the following inequalities
\begin{align}
   &  \left\Vert \int_{0}^{t}\left[F(\Vec{y}_{1}(v),\Vec{p},S_{2}(v)) - F(\Vec{y}_{2}(v),\Vec{p},S_{2}(v)) \right]dv\right\Vert_2 ^2 \nonumber \\
   &  \leq C_2 T \int_{0}^{t}\left\Vert \Vec{y}_{1}(v)-\Vec{y}_{2}(v)\right\Vert_2 ^2 dv \label{ineq_y}\tag{14}
\end{align}

By combining the three inequalities ~(\ref{ineq_F})-(\ref{ineq_y}), we obtain:
\begin{align}
& \left\Vert \Vec{y}_{1}(t)-\Vec{y}_{2}(t)\right\Vert_2 ^2 \leq \nonumber \\
& C(T) \left(\int_{0}^{t}\left|S_{1}(v)-S_{2}(v)\right|^2 dv+\int_{0}^{t}\left\Vert \Vec{y}_{1}(v)-\Vec{y}_{2}(v)\right\Vert_2 ^2 dv \right), \label{ineq_simple}\tag{13}
\end{align}
where $C(T)$ is a constant depending only on $T$ and Lipschitz constants, $C_1$ and $C_2$, of $F$. 
By applying Gr\"{o}nwall's Inequality (see also Theorem A in \cite{ye2007generalized}) to Eq.~(\ref{ineq_simple}) over time interval $[0,T]$, we obtain an upper bound on $\left\Vert \Vec{y}_{1}-\Vec{y}_{2}\right\Vert_{{L_2([0,T])}}$, leading to the desired inequality given by Eq.~(\ref{ineq_diff_y}).
\end{proof}

\begin{theorem}\label{convergence}
    Let $\{ \Vec{p}_n \}_{n=1}^{\infty}$ be the sequence of global minimizers of Eq.~(\ref{p_n_loss}), obtained at the $n^{th}$ iteration of HADES-NN using a smooth approximation of the input, $\tilde{S}_n(t)$. Then, $\Vec{p}_n$ converges to the global minimizer $\Vec{p}_{*}$ of Eq.~(\ref{p_star_loss}).
\end{theorem}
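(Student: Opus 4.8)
The plan is to recast the claim as a statement about minimizers of a sequence of loss functions that converges \emph{uniformly in the parameter} to the true loss, and then to upgrade uniform convergence of the losses to convergence of their minimizers using the identifiability hypothesis. Write $L(\Vec{p}) = \left\Vert \Vec{y}_{obs}(\cdot)-\Vec{y}(\cdot;\Vec{p},S(\cdot))\right\Vert_{L_2(\{t_j^o\}_{j=1}^N)}$ for the true loss in Eq.~(\ref{p_star_loss}) and $L_n(\Vec{p}) = \left\Vert \Vec{y}_{obs}(\cdot)-\Vec{y}_n(\cdot;\Vec{p},\tilde{S}_n(\cdot))\right\Vert_{L_2(\{t_j^o\}_{j=1}^N)}$ for the $n^{th}$ perturbed loss in Eq.~(\ref{p_n_loss}), so that $\Vec{p}_{*}=\argmin_{\Vec{p}} L(\Vec{p})$ and $\Vec{p}_n=\argmin_{\Vec{p}} L_n(\Vec{p})$. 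The first step is uniform convergence $L_n\to L$. By the reverse triangle inequality, $|L_n(\Vec{p})-L(\Vec{p})| \le \left\Vert \Vec{y}(\cdot;\Vec{p},S)-\Vec{y}_n(\cdot;\Vec{p},\tilde{S}_n)\right\Vert_{L_2(\{t_j^o\})}$. I would control the right-hand side by observing that the Gr\"{o}nwall step in the proof of \cref{difference_wrt_S}, applied to Eq.~(\ref{ineq_simple}) \emph{before} integrating in $t$, yields the stronger pointwise bound $\sup_{t\in[0,T]}\left\Vert \Vec{y}(t;\Vec{p},S)-\Vec{y}_n(t;\Vec{p},\tilde{S}_n)\right\Vert_2 \le C'(T)\left\Vert S-\tilde{S}_n\right\Vert_{L_2([0,T])}$, with $C'(T)$ independent of $\Vec{p}$ by \cref{assumption2}. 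Since the discrete $L_2(\{t_j^o\})$ norm is dominated by this supremum, I obtain $\sup_{\Vec{p}}|L_n(\Vec{p})-L(\Vec{p})| \le C'(T)\left\Vert S-\tilde{S}_n\right\Vert_{L_2([0,T])}=:\varepsilon_n$, and because the network is trained via Eq.~(\ref{L_theta}) so that $\tilde{S}_n\to S$, we have $\varepsilon_n\to 0$.

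The second step converts uniform convergence into convergence of the minimum values. Using optimality of $\Vec{p}_n$ for $L_n$ and of $\Vec{p}_{*}$ for $L$, the chain $L(\Vec{p}_n)\le L_n(\Vec{p}_n)+\varepsilon_n \le L_n(\Vec{p}_{*})+\varepsilon_n \le L(\Vec{p}_{*})+2\varepsilon_n$ gives $L(\Vec{p}_n)\to L(\Vec{p}_{*})=\min L$. Thus $\{\Vec{p}_n\}$ is a minimizing sequence for the true loss.

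The third step extracts convergence of the minimizers themselves. Restricting the search to a compact parameter region (consistent with the bounded box used in the numerical experiments) or assuming coercivity of $L$, any subsequence of $\{\Vec{p}_n\}$ has a convergent sub-subsequence $\Vec{p}_{n_k}\to\Vec{q}$. Continuity of $L$ in $\Vec{p}$, which follows from the continuous dependence of the solution of Eq.~(\ref{general_de}) on parameters established in \cref{smoothness}, gives $L(\Vec{q})=\lim_k L(\Vec{p}_{n_k})=\min L$, so $\Vec{q}$ is a minimizer of $L$; by the identifiability hypothesis \cref{assumption1}, $\Vec{q}=\Vec{p}_{*}$. Since every convergent subsequence shares the limit $\Vec{p}_{*}$ and the sequence lives in a compact set, the whole sequence converges, $\Vec{p}_n\to\Vec{p}_{*}$, proving \cref{convergence}.

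I expect the third step to be the main obstacle. Uniform convergence of the losses forces the minimum \emph{values} to converge, but not the minimizers in general; \cref{assumption1} is exactly the ingredient that prevents the minimizers from drifting to a distinct point attaining the same limiting loss, and the argument above is where it must be invoked. I would also need to make the compactness of the parameter search explicit, since the formal domain is all of $\mathbb{R}^{D_p}$ and without coercivity a minimizing sequence could escape to infinity. A secondary technical point is the norm mismatch: the network is trained in the discrete norm of Eq.~(\ref{L_theta}), whereas \cref{difference_wrt_S} requires the continuous $L_2([0,T])$ norm of $S-\tilde{S}_n$; bridging this gap, for instance through a quadrature or density argument as the sampling of $S(t)$ refines, is what is needed to make the first step fully rigorous.
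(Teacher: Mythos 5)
Your proposal is correct, and at its core it uses the same ingredients as the paper's proof: the optimality of $\Vec{p}_n$ for the perturbed loss, the triangle inequality, the stability estimate of \cref{difference_wrt_S} with its $\Vec{p}$-independent constant, and the identifiability assumption \cref{assumption1}. Indeed, your step-two chain $L(\Vec{p}_n)\le L_n(\Vec{p}_n)+\varepsilon_n\le L_n(\Vec{p}_{*})+\varepsilon_n\le L(\Vec{p}_{*})+2\varepsilon_n$ contains, as its middle links, exactly the paper's displayed inequality chain. Where you genuinely go beyond the paper is in the passage from convergence of loss \emph{values} to convergence of \emph{minimizers}. The paper stops after establishing (in effect) $\limsup_n L_n(\Vec{p}_n)\le L(\Vec{p}_{*})$ and then simply asserts that $\Vec{p}_n$ converges to $\Vec{p}_{*}$ by \cref{assumption1}; it never converts $L_n(\Vec{p}_n)$ back into $L(\Vec{p}_n)$ (your uniform-convergence step), never argues that $\{\Vec{p}_n\}$ has convergent subsequences, and never invokes continuity of $L$ to pass to the limit. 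Your step three --- compactness (or coercivity), subsequence extraction, continuity of $L$ via \cref{smoothness}, identifiability to pin down the subsequential limit, and the every-subsequence argument to upgrade to convergence of the full sequence --- is precisely the content missing from the paper, and you correctly identify it as the main obstacle rather than a formality. The two caveats you raise also apply, unacknowledged, to the paper's own argument: (i) the theorem is posed over all of $\mathbb{R}^{D_p}$, so without a compact search region or coercivity of $L$ a minimizing sequence could escape to infinity; and (ii) \cref{difference_wrt_S} bounds continuous $L_2([0,T])$ norms, while the losses and the training criterion Eq.~(\ref{L_theta}) are discrete-time norms --- your suggestion to apply Gr\"{o}nwall \emph{before} integrating in $t$, yielding a supremum-in-time bound that dominates the discrete norm, is in fact the clean way to repair this gap in both proofs.
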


\begin{proof}
    Given $\varepsilon>0$, we assume we can find an integer $N>0$ so that $\left\Vert \tilde{S}_{n}(\cdot)-S(\cdot)\right\Vert_{L_2\left(\{t^{o}_{j}\}_{j=1}^{N}\right)}<\varepsilon$, for all $n>N$. This result was proved in \cite{park2020minimum}, and is a universal approximation theorem for neural networks. Let $\Vec{y}(t;\Vec{p}_n,S(t))$ be the solution of Eq.~(\ref{general_de}) with $\Vec{p}_n$ and $S(t)$. Then, 
\begin{align}
    \left\Vert \Vec{y}_{obs}(\cdot) - \Vec{y}(\cdot;\Vec{p}_n,\tilde{S}_{n}(\cdot)) \right\Vert_{L_2\left(\{t^{o}_{j}\}_{j=1}^{N}\right)} 
    & \leq \left\Vert \Vec{y}_{obs}(\cdot) - \Vec{y}(\cdot;\Vec{p}_{*},\tilde{S}_{n}(\cdot)) \right\Vert_{L_2\left(\{t^{o}_{j}\}_{j=1}^{N}\right)} \nonumber \\
    & \leq \left\Vert \Vec{y}_{obs}(\cdot) - \Vec{y}(\cdot;\Vec{p}_{*},S(\cdot)) \right\Vert_{L_2\left(\{t^{o}_{j}\}_{j=1}^{N}\right)} + \left\Vert \Vec{y}(\cdot;\Vec{p}_{*},S(\cdot)) - \Vec{y}(\cdot;\Vec{p}_{*},\tilde{S}_{n}(\cdot)) \right\Vert_{L_2\left(\{t^{o}_{j}\}_{j=1}^{N}\right)} \nonumber \\
    & \leq \left\Vert \Vec{y}_{obs}(\cdot) - \Vec{y}(\cdot;\Vec{p}_{*},S(\cdot)) \right\Vert_{L_2\left(\{t^{o}_{j}\}_{j=1}^{N}\right)} + C\varepsilon. \nonumber 
\end{align}

The first inequality follows from the definition of the global optimum of the loss function defined in Eq.~(\ref{p_n_loss}) for $(\Vec{p}_n,S_{n}(t))$. The second inequality follows from the triangle inequality. The third inequality follows from ~\cref{difference_wrt_S}.
 
Thus, we can obtain the following inequality:

\begin{equation}\nonumber
   \left\Vert \Vec{y}_{obs}(\cdot) - \Vec{y}(\cdot;\Vec{p}_n,\tilde{S}_{n}(\cdot)) \right\Vert_{L_2\left(\{t^{o}_{j}\}_{j=1}^{N}\right)} \leq \left\Vert \Vec{y}_{obs}(\cdot) - \Vec{y}(\cdot;\Vec{p}_{*},S(\cdot)) \right\Vert_{L_2\left(\{t^{o}_{j}\}_{j=1}^{N}\right)} + C\varepsilon
\end{equation}

By taking $\lim_{n\rightarrow\infty}$ of both sides, and $\varepsilon \rightarrow 0$, we have

\begin{equation}\nonumber
    \lim_{n\rightarrow\infty}\left\Vert \Vec{y}_{obs}(\cdot) - \Vec{y}(\cdot;\Vec{p}_n,S_{n}(\cdot)) \right\Vert_{L_2\left(\{t^{o}_{j}\}_{j=1}^{N}\right)} \leq \left\Vert \Vec{y}_{obs}(\cdot) - \Vec{y}(\cdot;\Vec{p}_{*},S(\cdot)) \right\Vert_{L_2\left(\{t^{o}_{j}\}_{j=1}^{N}\right)}
\end{equation}

This implies the sequence of $\{ \Vec{p}_n \}$ converges to a specific vector, then the vector is also the optimal solution of Eq.~(\ref{p_n_loss}) since $\Vec{p}$ is the optimal solution (by Assumption ~\ref{assumption1}). This proves ~\cref{convergence}.

\end{proof}

\section{Supplementary figures and tables}

\begin{figure}[h]
	\centering
	\includegraphics[width=\columnwidth]{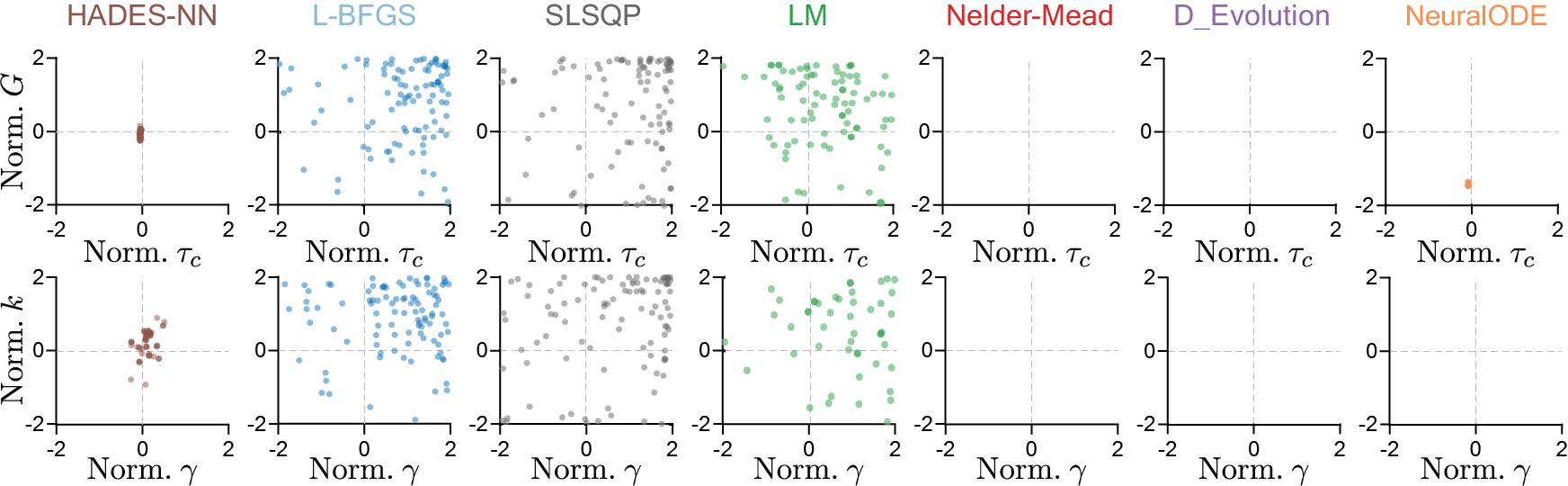}%
	\label{figureS1}
	\caption{Parameter estimation in circadian pacemaker model. The scatter plots of the estimated parameters after performing 100 trials with different initial parameter values for each of the seven methods (HADES-NN, L-BFGS, SLSQP, Levenberg-Marquardt (LM), Nelder-Mead (NM), Differential Evolution (DE), and NeuralODE). The parameter estimation results using direct search algorithms (NM and DE) failed to converge.}
    \vspace{-2em}
\end{figure}

\begin{table}[h!]\centering
\caption{Parameter values of the Lotka-Volterra model used to generate observation data points (yellow dots in Fig. 1B and C, ii). We generate $S(t)$ in Fig. 1C using the transition matrix $T$. Starting with the initial state $S(t_0)=1$, the next state $S(t_1)$ is determined based on the transition probabilities in $T$. Specifically, the system remains in the same state with a probability of 0.95 and transitions to the other state with a probability of 0.05.}
\begin{center}
\begin{small}
\begin{sc}
\begin{tabular}{cc}
Parameter & Values\\
\midrule
$p_1$ & 2\\
$p_2$ & 1/2\\ 
$p_3$ & 1\\
$p_4$ & 1\\
\midrule
\multicolumn{2}{c}{Transition matrix for $S(t)$}\\
\multicolumn{2}{c}{
$
   T = \begin{bmatrix} 
   0.95 & 0.05  \\
   0.05 & 0.95  \\
   \end{bmatrix}
$
}\\

\bottomrule
\end{tabular}
\end{sc}
\end{small}
\end{center}
\end{table}

\begin{table}\centering
\caption{Parameter values of the circadian pacemaker model used to generate observations (yellow dots in Fig. 3B, ii). Those values are adopted from \cite{forger1999simpler, skeldon2017effects}. The first four parameters were estimated while the others were assumed to be known. }
\begin{center}
\begin{small}
\begin{sc}
\begin{tabular}{ccc}

Parameter & Setting & Estimated \\
\midrule
$\tau_c$ & 20 & Y\\
$\gamma$ & 0.23 & Y\\ 
$G$ & 20 & Y\\
$k$ & 0.55 & Y\\
$\alpha_0$ & 0.16 & N\\ 
$b$ & 0.4 & N\\ 
$I_0$ & 9500 & N\\ 
$p$ & 0.6 & N\\ 
$\kappa$ & $\frac{12}{\pi}$ & N\\
$f$ & 0.99669 & N\\
\bottomrule
\end{tabular}
\end{sc}
\end{small}
\end{center}
\end{table}

\begin{table}\centering
\caption{Parameter ranges for the model of the yeast mating response network. The ranges of parameters were selected according to the previous study \cite{pomeroy2021predictive}. Initial values of parameters were randomly selected from these ranges for the estimation.}
\begin{center}
\begin{small}
\begin{sc}
\begin{tabular}{cc}
Parameter & Ranged allowed ($\log_{10}$)\\
\midrule
$K_{synF3}$ & $-6.9 \sim -1.6$ \\
$k_{fb1}$ & $-10.3 \sim -1.7$ \\
$K_{Fus3}$ & $-4.5 \sim 0.0$ \\
$k_{p1}$ & $-5.0 \sim -1.0$ \\
$k_{p2}$ & $-5.0 \sim -1.0$ \\
$k_{degF3}$ & $-3.6 \sim -1.5$ \\
$k_{synS12}$ & $-7.1 \sim -1.6$ \\
$k_{fb2}$ & $-10.5 \sim -1.7$ \\
$K_{Ste12}$ & $-4.5 \sim 0.0$ \\
$k_{m1}$ & $-2.3 \sim 0.0$ \\
$k_{a1}$ & $-11.0 \sim 4.0$ \\
$Digs_{T}$ & $-2.8 \sim 1.0$ \\
$k_{a2}$ & $-11.0 \sim 4.0$ \\
$k_{a3}$ & $-12.0 \sim 6.8$ \\
$k_{a5}$ & $-9.5 \sim -0.7$ \\
$K_{Far1}$ & $-4.5 \sim 0.0$ \\
$k_{p3}$ & $-5.0 \sim -1.0$ \\
$k_{p4}$ & $-11.0 \sim 0.0$ \\
$k_{degPF1}$ & $-3.7 \sim -0.5$ \\
$k_{synGFP}$ & $-7.1 \sim -0.6$ \\
$k_{a4}$ & $-10.5 \sim 0.0$ \\
$K_{GFP}$ & $-4.5 \sim 0.0$ \\
$k_{degGFP}$ & $-1.0$ (fixed) \\

\bottomrule
\end{tabular}
\end{sc}
\end{small}
\end{center}
\end{table}

\newpage

\section*{Acknowledgments}
J.K.K. is supported by the Institute for Basic Science IBS-R029-C3 and Samsung Science and Technology Foundation SSTF-BA1902-01. H.J. is supported by the National Research Foundation of Korea RS-2024-00357912 and Korea University grant K2411031. K.J. was supported by National Institutes of Health grant 1R01GM144959-01 and National Science Foundation grant DBI-1707400.

\bibliographystyle{unsrt}
\bibliography{main}
\end{document}